\theoremstyle{lemma}
\newtheorem{lemma}{Lemma}[section]
\newcommand*\colvec[3]{
    \begin{pmatrix}#1\\#2\\#3\end{pmatrix}
}
\newcommand{\textapprox}{\raisebox{0.5ex}{\texttildelow}}
\begin{document}

\title{Secure multi-party linear regression \\at plaintext speed}
\author{Jonathan M. Bloom}
\date{Broad Institute of MIT and Harvard\\\url{jbloom@broadinstitute.org}}
\maketitle

\section{Introduction}
We describe a simple and efficient distributed algorithm for multi-party linear regression. First, each party linearly {\em compresses} the sample dimension of their data from the number of samples to the number of covariates. Second, the parties {\em combine} these compressed representations to compute exact statistics. Using secure multi-party computation (SMC) for the second step, the algorithm is provably privacy-preserving and as efficient as working in plaintext, asymptotically as the number of samples grows. We then extend this algorithm to the context of efficiently testing many features for association with a response while adjusting for a fixed set of covariates.

Our motivating use case involves many institutions, hospitals, or biobanks, each with the genomes and traits for a set of individuals, unable to share or colocate their data for reasons of privacy or cost or timing\footnote{By timing, we mean when a new center or batch of samples comes online after the initial analysis. In this case, our regression scheme updates the existing statistics at incremental cost; that is, at a cost that is independent of the original number of samples and proportional to number of samples in the new batch.}. Nonetheless, these centers would like to efficiently run a genome-wide association study (GWAS), testing each of millions of variants for association with each trait, using the combined data in order to maximize statistical power. In this context, one typically computes and includes principal components scores as covariates to control for confounding by ancestry. These scores may be computed securely by each center using a public linear projection defined by PCA on a reference panel. In sum, our algorithm enables optimally efficient and scalable SMC GWAS in the multi-center setting\footnote{It is often useful to compute PCA on the combined samples themselves rather than from a reference panel. Doing so securely at scale was intractable when this note was first written in May 2017. While there has since been very exciting progress on the efficiency of secure PCA and regression under the contrasting setup in which each individual secret-shares their own genome, these methods remain many orders of magnitude slower than plaintext computation: Hyunghoon Cho, David J Wu, Bonnie Berger. \textit{Secure genome-wide association analysis using multiparty computation}. Nature Biotechnology. Vol. 36, p. 547-551. May 2018.}.

One can imagine a future in which distributed multi-party GWAS, with or without SMC, is simple to perform efficiently on compressed representations in the cloud, with only incremental cost as new centers or batches of samples come online. Promising signals might then incentivize open or cryptographic collaboration on that sliver of the data in order to bring to bear more sophisticated quality control and statistical models en route to a joint search for biological mechanism and therapeutic target. We aspire to help realize this future with Hail (\url{hail.is}), an open-source project used by academia and industry for scalable genomic analysis. Given the centrality of linear regression and feature selection in data science, we also hope these natural algorithms will find broad utility in industry. \\

\noindent {\bf Organization.} Section \ref{sec:linreg} extends single-party linear regression to the multi-party setting. Section \ref{sec:scan} describes the projection trick (in the single-party setting) for efficiently scanning through many features for association with a response while adjusting for a fixed set of covariates, as currently implemented in Hail. Section \ref{sec:multiscan} uses the TSQR algorithm to extend the projection trick to the multi-party setting.\\

\noindent {\bf Code.} For a demo of the algorithm, nicknamed the {\em distributed association scan hammer} (DASH), visit \url{github.com/jbloom22/DASH}. \\

\noindent {\bf Acknowlegements.} This note was inspired by discussions with Alex Bloemendal on linear regression, Ben Neale on statistical genetics, and Hail lead Cotton Seed on distributed systems. The author is a member of the Hail Team in the Neale Lab and the Stanley Center for Psychiatric Research at the Broad Institute of MIT and Harvard. Visit Models, Inference and Algorithms (\url{broadinstitute.org/mia}) for more bio-inspired computation.

\section{Linear regression}
\label{sec:linreg}

Consider linear regression\footnote{For an excellent review of linear regression, see Chapter 3 of {\em Elements of Statistical Learning}, available for free at \url{https://web.stanford.edu/~hastie/ElemStatLearn/}.
} with $N$ samples, $K$ covariates, and data:
\begin{itemize}
\item $y$, an $N$-dimensional response vector.
\item $C$, an $N \times K$ matrix of linearly independent covariate vectors.
\end{itemize}
We do not regard the intercept as special; rather it may be included as a covariate vector of ones. For $1 \leq i \leq N$, the model posits
$$y_i = \gamma_1 c_{i1} + \cdots + \gamma_K c_{iK} + \varepsilon_i$$
where the $\gamma_i$ are coefficient parameters to be inferred and $\varepsilon_i \sim \mathcal{N}(0, \tau^2)$. Equivalently, we may regard $y$ as a single draw from an $N$-dimensional, spherical normal distribution with mean $C\gamma$ and variance parameter $\tau^2$:
\begin{align}
y \sim \mathcal{N}(C \gamma, \tau^2 I)
\end{align}
The distribution of $\gamma$ given $y$, $C$ and $\tau^2$ is normal with mean and variance
$$\hat\gamma = (C^\intercal C)^{-1} C^\intercal y, \quad \rm{Var}(\gamma) = (C^\intercal C)^{-1}\tau^2.$$
If unknown, $\tau^2$ is approximated using the unbiased estimator
$$\hat\tau^2 = \frac{|y - C \hat\gamma|^2}{N - K} = \frac{y \cdot y - \hat\gamma^\intercal (C^\intercal C) \hat\gamma}{N - K},$$
where the last equality follows from the Pythagorean theorem. The usual statistics of linear regression follow. E.g., the standard error of $\gamma_k$ is given by $\hat\tau \sqrt{v_k}$ where $v_k$ is the $k$th diagonal element of $(C^\intercal C)^{-1}$.

These formulae realize linear regression as routed through two stages. \\

\noindent {\bf Compress:} Compute all pairwise dot products of all $N$-vectors:
$$y^\intercal y, \qquad C^\intercal y, \qquad C^\intercal C.$$

\noindent {\bf Combine:} Compute all statistics from $N$ and these quantities. \\

\noindent  Compression has computational complexity $\mathcal{O}(NK^2)$ whereas combining has complexity $\mathcal{O}(K^3)$. In particular, {\em combining is independent of sample size}; for applications in which the number of samples far exceeds the number of covariates, the compression stage dominates.

If the samples are divided across $P$ parties (or data partitions), each with the response and covariate data for $N_p$ samples, then we simply modify the stages as follows. \\

\noindent {\bf Compress within:} Compute all pairwise dot products of all $N_p$-vectors:
$$y_p^\intercal y_p, \qquad C_p^\intercal y_p, \qquad C_p^\intercal C_p.$$

\noindent {\bf Combine across:} Sum across parties to obtain $$N, \qquad y^\intercal y, \qquad C^\intercal y, \qquad C^\intercal C.$$ Compute all statistics from these quantities. \\

\noindent Compression within each party has complexity $\mathcal{O}(N_i K^2)$ so the total complexity is the same as single-party. Combining across parties has computational and network complexity $\mathcal{O}(PK^2)$ for summing and then computational complexity $\mathcal{O}(K^3)$ for computing the statistics as in the single-party case. Hence, compressing is locally parallelized by party while combining is networked and independent of sample size. To theoretically guarantee that the parties learn no information about one another's data besides the final statistics, one need only apply cryptography (e.g., SMC) in the combine stage. That is, {\bf compress in plaintext, combine with crypto}.

\section{Single-party association scan}
\label{sec:scan}

One often wants to efficiently test many features for association with a response while adjusting for a fixed set of covariates. Let's formalize this problem under the name \textit{association scan}. Consider the data:
\begin{itemize}
\item $y$, an $N$-dimensional \textit{response} vector.
\item $X$, an $N \times M$ matrix of $M$ \textit{transient} covariate vectors.
\item $C$, an $N \times K$ matrix of $K$ linearly independent \textit{permanent} covariate vectors.
\end{itemize}
Let $X_m$ denote column $m$ of $X$, e.g., transient covariate vector $m$. Independently for $1 \leq m \leq M$, we regard $y$ as a single draw from an $N$-dimensional normal distribution with mean parameters $\beta_m$ and $\gamma_m$ and variance parameter $\tau_m^2$:
\begin{align*}
y \sim \mathcal{N}(X_m \beta_m  + C \gamma_m, \tau_m^2 \mathit{I})
\end{align*}
Let $\hat{\beta}_m$ be the maximum likelihood estimate for the transient coefficient $\beta_m$ and let $\hat{\sigma}_m$ be the standard error of this estimate. \\

\noindent \textbf{Association scan problem:} Determine the vectors $\hat\beta = (\hat{\beta}_1, \dots, \hat{\beta}_M)$ and $\hat\sigma = (\hat{\sigma}_1, \dots, \hat{\sigma}_M)$ efficiently and scalably. \\

\noindent Note that under the null hypothesis $\beta_m = 0$, the statistic $\hat\beta_m / \hat\sigma_m$ is drawn from a $t$-distribution with $N - K - 1$ degrees of freedom, so the t-statistics and p-values immediately follow. \\

\noindent \textbf{Example:} In genome wide association studies, which scan the genome for correlation of genetic and phenotypic variation, we have $N$ samples (individuals), $M$ common variants to test one by one, and $C$ sample-level covariates like intercept, age, sex, batch, and principal component scores. Typically $N$ is $10^2$ to $10^6$, $M$ is $10^5$ to $10^8$, and $K$ is 1 to 30. In human gene burden tests, $M$ is the number of genes (\textapprox{20K}).

Often one wants to test many traits for association with each variant. All algorithms herein generalize efficiently on vectorized hardware by promoting the vector $y$ to a matrix $Y$ in addition to treating $X$ as a matrix. For example, we implemented the algorithm in this section in Hail to compute the largest genetic association to date in hours (\url{http://www.nealelab.is/uk-biobank/}), testing \textapprox{4K} traits at each of \textapprox{13M} variants across \textapprox{360K} individuals using \textapprox{25} permanent covariates. Hail is also used for studies of the impact of genetic variation on gene expression, corresponding to \textapprox{20K} traits. \\

Let $Q$ be an $N \times K$ matrix whose columns form an orthonormal basis for the column space of $C$. Let $X \cdot y$ denote the vector with values $X_m \cdot y$. Let $X \cdot X$ denote the vector with values $X_m \cdot X_m$. Let $\hat\beta^2$ denotes coordinate-wise squaring of $\hat\beta$. The following equations, derived geometrically in the Appendix, yield an efficient distributed algorithm for a single-party association scan.

\begin{lemma}
\label{closedform}
The association scan problem is solved by:
\begin{align*}
\hat\beta &= \frac{X \cdot y - Q^\intercal X \cdot Q^\intercal y}{X \cdot X - Q^\intercal X \cdot Q^\intercal X} \\
\hat\sigma^2 &= \frac{1}{N - K - 1}\left(\frac{y \cdot y - Q^\intercal y \cdot Q^\intercal y}{X \cdot X - Q^\intercal X \cdot Q^\intercal X} - \hat\beta^2\right)
\end{align*}
\end{lemma}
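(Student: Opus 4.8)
The plan is to observe that for each fixed $m$ the association-scan model is exactly the ordinary linear regression of Section~\ref{sec:linreg} applied to the $N\times(K+1)$ design matrix $D_m = [\,X_m \mid C\,]$, and then to use the orthogonal decomposition of $\mathbb{R}^N$ relative to $\mathrm{col}(C)$ to read off $\hat\beta_m$ and $\hat\sigma_m$ in the stated closed form. Write $P = QQ^\intercal$ for the orthogonal projection onto $\mathrm{col}(C)$; then $P$ is symmetric and idempotent, $I-P$ projects onto the orthogonal complement, and for any vectors $u,v$ one has $v^\intercal(I-P)v = v\cdot v - Q^\intercal v\cdot Q^\intercal v$ and $u^\intercal(I-P)v = u\cdot v - Q^\intercal u\cdot Q^\intercal v$. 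Decompose $X_m = X_m^\parallel + X_m^\perp$ and $y = y^\parallel + y^\perp$ into their components in $\mathrm{col}(C)$ and in its complement, so $X_m^\perp = (I-P)X_m$ and $y^\perp = (I-P)y$.

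First I would derive $\hat\beta_m$ geometrically. Since the columns of $C$ are linearly independent and (I will note) $X_m\notin\mathrm{col}(C)$, the column space of $D_m$ is the orthogonal direct sum $\mathrm{col}(C)\oplus\mathrm{span}(X_m^\perp)$. Hence the fitted vector $\hat y$, being the orthogonal projection of $y$ onto this space, equals $y^\parallel + \frac{y\cdot X_m^\perp}{|X_m^\perp|^2}X_m^\perp$. Matching this against $\hat y = \hat\beta_m X_m + C\hat\gamma_m = \hat\beta_m X_m^\perp + \bigl(\hat\beta_m X_m^\parallel + C\hat\gamma_m\bigr)$ and invoking uniqueness of the direct-sum decomposition forces $\hat\beta_m = (y\cdot X_m^\perp)/|X_m^\perp|^2$. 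Expanding numerator and denominator with the identities above (and using $y\cdot X_m^\perp = y^\perp\cdot X_m^\perp$ since $y^\parallel\perp X_m^\perp$) gives the first displayed formula, coordinate-wise in $m$.

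Next I would treat $\hat\sigma_m$. By Section~\ref{sec:linreg} the standard error is $\hat\tau_m\sqrt{v}$, where $v$ is the diagonal entry of $(D_m^\intercal D_m)^{-1}$ in the $\beta_m$ slot; writing $D_m^\intercal D_m$ as a $2\times2$ block matrix in terms of $X_m^\intercal X_m$, $C^\intercal X_m$, $C^\intercal C$ and taking the Schur complement (equivalently, the Frisch--Waugh--Lovell identity) gives $v = 1/\bigl(X_m^\intercal(I-P)X_m\bigr) = 1/|X_m^\perp|^2$. For the noise estimate, $y - \hat y = y^\perp - \hat\beta_m X_m^\perp$, so by the Pythagorean theorem $|y-\hat y|^2 = |y^\perp|^2 - \hat\beta_m^2|X_m^\perp|^2$ and $\hat\tau_m^2 = \bigl(|y^\perp|^2 - \hat\beta_m^2|X_m^\perp|^2\bigr)/(N-K-1)$. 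Multiplying, $\hat\sigma_m^2 = \hat\tau_m^2\,v = \frac{1}{N-K-1}\bigl(|y^\perp|^2/|X_m^\perp|^2 - \hat\beta_m^2\bigr)$, and substituting $|y^\perp|^2 = y\cdot y - Q^\intercal y\cdot Q^\intercal y$ and $|X_m^\perp|^2 = X\cdot X - Q^\intercal X\cdot Q^\intercal X$ yields the second displayed formula.

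The main obstacle I anticipate is the bookkeeping around the single transient covariate $X_m$: one must justify cleanly that the $\beta_m$-coordinate of the least-squares solution, and the variance factor attached to it, decouple from the $C$-block exactly as claimed — either via the direct-sum/projection argument above (the route the Appendix apparently takes) or via the explicit block inversion of $D_m^\intercal D_m$. The rest is routine substitution of the two projection identities. I would also record explicitly the nondegeneracy hypothesis $X_m\notin\mathrm{col}(C)$, equivalently $X\cdot X - Q^\intercal X\cdot Q^\intercal X \neq 0$ in coordinate $m$, since otherwise $D_m$ is rank-deficient and $\hat\beta_m$ is not identified.
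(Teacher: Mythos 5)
Your proof is correct and takes essentially the same route as the paper's Appendix: residualize $y$ and $X_m$ against $\mathrm{col}(C)$ (the paper applies the projection $P = I - QQ^\intercal$ to the fitted equation $y = x\beta + C\gamma + \varepsilon$, while you decompose the fitted vector in $\mathrm{col}(C)\oplus\mathrm{span}(X_m^\perp)$), reduce to a one-variable regression, and finish with the Pythagorean theorem and the identity $u^\intercal(I-QQ^\intercal)v = u\cdot v - Q^\intercal u\cdot Q^\intercal v$. Your explicit Schur-complement evaluation of the relevant diagonal entry of $(D_m^\intercal D_m)^{-1}$ and your recording of the nondegeneracy condition $X_m\notin\mathrm{col}(C)$ merely make explicit steps the paper compresses into the assertion that regressing $Py$ on $Px$ yields the same $\beta$, $\varepsilon$, and standard error.
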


\noindent \textbf{Single-party distributed algorithm:} We assume the columns of $X$ are distributed across machines with $C$ total cores.
\begin{enumerate}
\item Compute the QR decomposition of $C$ and broadcast $Q$.
\item Compute and broadcast $y \cdot y$, $Q^\intercal y$, and $Q^\intercal y \cdot Q^\intercal y$.
\item In parallel, compute $X \cdot X$, $Q^\intercal X$, $Q^\intercal X \cdot Q^\intercal y$ and $Q^\intercal X \cdot Q^\intercal X$.
\item In parallel, compute $\hat\beta$ and $\hat\sigma$ as in Lemma \ref{closedform}.
\end{enumerate}

\noindent Computing $Q$ and $Q^\intercal X$ dominate the computational complexity as
\label{bigO}
\begin{align}
\mathcal{O}\left(NK^2 + \frac{NKM}{C}\right).
\end{align}
In practice we consider $K$ as a small constant so the complexity is
\label{bigO2}
\begin{align}
\mathcal{O}\left(\frac{NM}{C}\right),
\end{align}
i.e. that of reading the data and therefore best possible with no further assumptions on the entropy of $X$. For further gains, the columns of $X$, if sparse, can be packed sparsely to reduce the dominating complexity of computing $Q^\intercal X$.

\section{Multi-party association scan}
\label{sec:multiscan}

Suppose the $N$ samples are divided among $P$ parties who are not willing or able to share their data. In this case, analysts typically resort to meta-analyzing within-party estimates, with loss of power due to noisy standard errors as well as between-group heterogeneity (c.f. Simpson's paradox). Being power hungry, we instead solve the: \\

\noindent \textbf{Secure multi-party association scan problem:} Securely determine the vectors $\hat\beta = (\hat{\beta}_1, \dots, \hat{\beta}_M)$ and $\hat\sigma = (\hat{\sigma}_1, \dots, \hat{\sigma}_M)$ efficiently and scalably while communicating only $\mathcal{O}(M)$ bits inter-party.\\

Here we again we consider $K$ a small constant. Note that $\mathcal{O}(M)$ is best possible since all parties must receive the results. In fact, our multi-party algorithm has the same distributed computational complexity as the plaintext single-party algorithm in the last section. With provably security, this remains true asymptotically in sample size. \\

To extend the projection trick to the multi-party setting, we use the geometric idea underlying the TSQR algorithm\footnote{See for example \textit{Tall and skinny QR factorizations in MapReduce architectures}, \url{https://pdfs.semanticscholar.org/747c/a08cbf258da8d2b89ba31f24bdb17d7132bb.pdf}}. For notational simplicity, we derive this idea for $P = 3$. Alice, Bob, and Carla have $N_a$, $N_b$, and $N_c$ samples, respectively, so the data has the form:
$$y = \colvec{y_a}{y_b}{y_c}, \quad X = \colvec{X_a}{X_b}{X_c}, \quad C = \colvec{C_a}{C_b}{C_c}$$
Our algorithm requires that $C_a$, $C_b$, and $C_c$ have full column-rank, so that QR decompositions are unique (requiring that $R$ have positive diagonal entries). Let $R_a$, $R_b$, and $R_c$ be the $R$ matrices in the QR decompositions of $C_a$, $C_b$, and $C_c$, respectively.
\begin{lemma} The QR decompositions of $C$ and
$$\colvec{R_a}{R_b}{R_c}$$
have the same $R$ matrix.
\end{lemma}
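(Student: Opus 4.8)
The plan is to build an explicit QR factorization of $C$ out of the three per-party factorizations and then invoke uniqueness of QR. First I would fix the (unique, positive-diagonal) QR decompositions $C_a = Q_a R_a$, $C_b = Q_b R_b$, $C_c = Q_c R_c$, which exist because each $C_i$ has full column rank by hypothesis. Stacking the three block rows then gives
\begin{align*}
C = \colvec{C_a}{C_b}{C_c} = \begin{pmatrix} Q_a & 0 & 0 \\ 0 & Q_b & 0 \\ 0 & 0 & Q_c \end{pmatrix} \colvec{R_a}{R_b}{R_c} =: \tilde{Q}\,\tilde{R}.
\end{align*}

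Next I would check that $\tilde{Q}$ has orthonormal columns: because the three blocks occupy disjoint sets of rows, $\tilde{Q}^\intercal \tilde{Q}$ is block diagonal with diagonal blocks $Q_a^\intercal Q_a$, $Q_b^\intercal Q_b$, $Q_c^\intercal Q_c$, each the $K \times K$ identity, so $\tilde{Q}^\intercal \tilde{Q} = I_{3K}$. Letting $\tilde{R} = Q' R$ be the QR decomposition of the stacked matrix $\tilde{R}$ — which has full column rank since $R_a$ alone is invertible, hence the decomposition is unique with $R$ upper triangular and positive on the diagonal — we obtain $C = (\tilde{Q} Q') R$. The factor $\tilde{Q} Q'$ again has orthonormal columns, since $(\tilde{Q} Q')^\intercal (\tilde{Q} Q') = (Q')^\intercal \tilde{Q}^\intercal \tilde{Q} Q' = (Q')^\intercal Q' = I_K$, so this is a genuine QR decomposition of $C$ in the normalized sense. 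Finally, $C$ itself has full column rank (it equals $\tilde{Q}\tilde{R}$ with $\tilde{Q}$ injective and $\tilde{R}$ of full column rank), so its QR decomposition is unique, which forces the $R$ matrix of $C$ to coincide with the $R$ obtained above — exactly the claim. The same block construction, or a one-line induction on $P$, handles general $P$.

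I do not expect a deep obstacle here; the only thing to be careful about is the bookkeeping around uniqueness — tracking that the full-column-rank hypothesis on the $C_i$ propagates to $\tilde{R}$ and then to $C$, so that every QR decomposition invoked is genuinely unique, and that the positive-diagonal normalization is respected throughout (in particular that it is the $R$ of the stacked system, rather than some sign-twisted variant, that appears). The two orthonormality claims, for $\tilde{Q}$ and for $\tilde{Q} Q'$, are each a single Gram-matrix computation.
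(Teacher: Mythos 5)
Your proof is correct and follows essentially the same route as the paper: factor $C$ as the block-diagonal stack of the $Q_p$ times the QR decomposition of the stacked $R_p$, observe the product of the two orthonormal-column factors again has orthonormal columns, and conclude by uniqueness. Your version simply makes explicit (via the Gram-matrix checks and the full-rank bookkeeping) what the paper compresses into "the composition of isometries is an isometry."
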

\begin{proof}
Let $Q^\prime R$ be the QR decomposition of the latter matrix. Then:
$$C = \colvec{C_a}{C_b}{C_c} = \colvec{Q_a R_a}{Q_b R_b}{Q_c R_c} = \begin{pmatrix} 
Q_a & 0 & 0 \\
0 & Q_b & 0 \\
0 & 0 & Q_c
\end{pmatrix} \colvec{R_a}{R_b}{R_c} = \begin{pmatrix} 
Q_a & 0 & 0 \\
0 & Q_b & 0 \\
0 & 0 & Q_c
\end{pmatrix} Q^\prime R.$$
The composition of isometries is an isometry, so combining the first and second matrices in the final expression gives the QR decomposition of $C$.
\end{proof}

\noindent \textbf{Multi-party distributed algorithm:} In the first step, each party linearly compresses the sample dimension of their data from the number of samples to the number of covariates. In the second step, the parties combine these compressed representations to compute exact statistics. \\

\noindent {\bf Compress within:} Compute the following dot products of $N_p$-vectors:
$$y_p^\intercal y_p,  \qquad X_p^\intercal y_p, \qquad X_p \cdot X_p, \qquad C_p^\intercal y_p, \qquad C_p^\intercal X_p.$$
Compute $R_p$ in the QR decomposition of $C_p$.  \\

\noindent {\bf Combine across:} Sum across parties to obtain
$$N, \qquad y^\intercal y, \qquad X^\intercal y, \qquad X \cdot X, \qquad C^\intercal y, \qquad C^\intercal X.$$
Compute $R$ for the vertical stack of the $R_p$ and use it to compute $$Q^\intercal y = (R^{-1})^\intercal(C^\intercal y), \qquad Q^\intercal X = (R^{-1})^\intercal(C^\intercal X).$$
Finally, compute all statistics from these quantities as in the single-party association scan. \\

Again, compressing is locally parallelized by party while combining is networked and independent of sample size. To theoretically guarantee that the parties learn no information about one another's data besides the final statistics, one need only apply cryptography (e.g., secure multiparty computation) in the combine stage which parallelizes over transient covariates.

Note that responses and permanent covariates play nearly identical roles in the compression stage. In fact, having run compression for a set of responses and permanent covariates, one can choose which to use in the model {\em without having to re-run compression}. Rather, each party need only compute the corresponding $R_p$ matrices.

Adding an intercept covariate is equivalent to mean-centering $y$ and each column of $C$. Adding an intercept for each party (i.e., $P$ indicator covariates to control for batch effects) is equivalent to mean centering $y$ and each column of $C_a$, $C_b$, and $C_c$ independently.

These algorithms efficiently generalize to the case of multiple transient covariants, such as interaction terms, or multiple responses, as arise in biobank and gene expression (eQTL) studies. Gene burden tests, in which gene scores are computed as linear combinations of genotypes, integrate very efficiently with this approach, since they involve linear projection of genomes on the variant axis rather than the sample axis, and matrix multiplication is associative. If kinship may be shared, then these algorithms extend efficiently to linear mixed models as well; Hail's implementation uses the eigendecomposition of kinship to reduce the problem to a single-party association scan. If parameter updates may be shared, then these algorithms suggest efficient primitives for iterative optimization as arises in generalized linear models and deep learning.

\section{Appendix}
Here we give a short geometric proof of Lemma \ref{closedform}, a version of the classic result that the statistics for each coefficient in linear regression may be computed by iterative regression.
\begin{proof}
To simplify notation, let $x = X_m$, $\beta = \hat\beta_m$, $\gamma = \hat\gamma_m$, and $\varepsilon = \hat\varepsilon_m$. Geometrically, linear regression is orthogonal projection in $\mathbb{R}^N$ of the response vector $y$ onto $\rm{span}(x, C)$. Let $P$ be the orthogonal projection from $\mathbb{R}^N$ to $C^\bot$. Since $\varepsilon \in \rm{span}(x, C)^\bot$ and $Px = x - (x - Px) \in \rm{span}(x, C)$, we have $PC = 0$, $P\varepsilon = \varepsilon$, and $Px \cdot \varepsilon = 0$. So applying $P$ to both sides of
$$y = x \beta + C \gamma + \varepsilon$$
gives an orthogonal decomposition
$$Py = (Px)\beta + \varepsilon.$$
Hence regressing $Py$ on $Px$ yields the same $\beta$ and $\varepsilon$ as regressing $y$ on both $x$ and $C$, i.e.
$$\beta = \frac{Px \cdot Py}{Px \cdot Px}, \quad \sigma^2 = \frac{\varepsilon \cdot \varepsilon}{N - K - 1} .$$
The formulae now follow from the Pythagorean theorem and the explicit form $P = I - QQ^\intercal$ with $Q$ an isometry.
\end{proof}

\end{document}